\begin{document}

\title{Learning with Category-Equivariant Representations for Human Activity Recognition}
\titlerunning{Learning with Category-Equivariant Representations}
\author{Yoshihiro Maruyama\inst{1,2}}
\authorrunning{Y. Maruyama}
\institute{School of Informatics, Nagoya University, Japan\\
\email{maruyama@i.nagoya-u.ac.jp}
\and
School of Computing, Australian National University, Australia\\
\email{yoshihiro.maruyama@anu.edu.au}
}

\maketitle

\begin{abstract}
Human activity recognition is challenging because sensor signals shift with context, motion, and environment; effective models must therefore remain stable as the world around them changes. We introduce a categorical symmetry-aware learning framework that captures how signals vary over time, scale, and sensor hierarchy. We build these factors into the structure of feature representations, yielding models that automatically preserve the relationships between sensors and remain stable under realistic distortions such as time shifts, amplitude drift, and device orientation changes. On the UCI Human Activity Recognition benchmark, this categorical symmetry-driven design improves out-of-distribution accuracy by approx.\ 46 percentage points (approx.\ 3.6× over the baseline), demonstrating that abstract symmetry principles can translate into concrete performance gains in everyday sensing tasks via category-equivariant representation theory.
\keywords{Category-Equivariant Feature Representations \and UCI HAR Dataset \and Categorical Equivariant Deep Learning \and Category-Equivariant Neural Network \and Categorical Equivariant Representation Learning \and HCI}
\end{abstract}

\section{Introduction}
\label{sec:intro}

Recognizing human activity from sensor data demands models that remain dependable even as environmental conditions and sensor orientations change. In practice, raw inertial signals undergo temporal misalignment, amplitude drift, and device pose variation; these shifts routinely degrade standard representations trained under fixed conditions. Classical signal processing has long emphasized invariances to such effects (e.g., shift properties of Fourier magnitudes and normalization for gain) and elastic matching for timing variability \cite{OppenheimSchafer2009,SakoeChiba1978}. Yet in contemporary learning systems, robustness is often pursued indirectly via data augmentation or distributional objectives \cite{Arjovsky2019}, which may provide limited guarantees when the symmetry structure is rich and multi-faceted, as in Human Activity Recognition (HAR) benchmarks \cite{Anguita2013,Kwapisz2011}.

In this paper we develop a principled method to address robustness through \emph{equivariance}: designing representations that commute with symmetry actions. Group-equivariant convolution and its steerable and gauge-equivariant extensions have delivered strong invariance and parameter sharing on Euclidean and manifold domains \cite{CohenWelling2016,CohenWelling2017,WeilerCesa2019,Cohen2019Gauge,KondorTrivedi2018,Finzi2021EMLP}; related ideas underlie permutation-invariant and equivariant models for sets and graphs \cite{Zaheer2017,Maron2019,KerivenPeyre2019}. These developments are surveyed within the broader field of geometric deep learning \cite{Bronstein2021}. However, many real sensing pipelines exhibit symmetry beyond groups: they combine \emph{compositional} transformations (e.g., time shifts and gain rescalings) with \emph{hierarchical} relations (e.g., axes $\rightarrow$ magnitude $\rightarrow$ total), the latter being naturally described by partial orders (posets). Such settings call for a language that unifies actions and hierarchies.

Category theory \cite{MacLane1998,Spivak2014,FongSpivak2019}  provides that unifying language: symmetries and data domains can be organized as \emph{categories}, and equivariant models become \emph{natural transformations} between functors. Building on this idea, we propose a \emph{category-equivariant} framework for HAR in which the symmetry category is a product of a group and a poset. Concretely, we take a group that captures temporal shifts and per-sensor gain rescalings, and a poset that encodes the sensor hierarchy from tri-axial blocks to magnitudes and a total aggregate. Representations are constructed so that pushing signals along these morphisms and then representing is equal to representing first and then transporting features, which amounts to the naturality condition in terms of category theory.

Our instantiation is deliberately simple and transparent: per-sensor RMS normalization (gain), axis-to-magnitude pooling (hierarchy), and low-frequency Fourier magnitudes (time-shift) yield a compact feature vector that is invariant/equivariant to the group and natural with respect to the poset. A lightweight linear classifier suffices on top. Despite its simplicity, this category-aware representation exhibits substantially improved robustness under test-time shifts in timing, amplitude, and orientation on UCI HAR \cite{Anguita2013}, while maintaining clarity about why it works. Our approach complements, rather than competes with, group-based architectures \cite{CohenWelling2016,CohenWelling2017,WeilerCesa2019,Cohen2019Gauge,KondorTrivedi2018,Finzi2021EMLP}, set/graph equivariance \cite{Zaheer2017,Maron2019,KerivenPeyre2019}, and distributional robustness \cite{Arjovsky2019}: our approach targets a different structural axis, namely the joint presence of compositional actions and hierarchical relations.

The contributions of this paper can be summarized as follows.
\begin{enumerate}
\item We formulate a \emph{Group$\times$Poset} symmetry category for inertial sensing and define category-equivariant representations as natural transformations between data and feature functors.  
\item We give a constructive realization for HAR: RMS normalization (gain), axis-to-magnitude pooling (hierarchy), and Fourier magnitudes (time-shift), yielding a compact, interpretable feature map grounded in classical signal principles \cite{OppenheimSchafer2009}.  
\item We demonstrate substantial out-of-distribution robustness on UCI HAR \cite{Anguita2013} under time/gain/pose perturbations and provide ablations clarifying the separate and combined effects of the group and poset components.  
\item We give a theoretical analysis: naturality on generators implies full category-equivariance, and invariant blocks enjoy exact risk invariance under group perturbations; non-equivariant baselines degrade in proportion to their expected orbit diameter.
\item Taken together, these results suggest that elevating equivariance from groups to categories—where actions and hierarchies coexist—offers a compact and effective route to robustness in everyday sensing tasks, complementing existing geometric and distributional approaches \cite{Bronstein2021,Arjovsky2019}.
\end{enumerate}

The rest of the paper is organized as follows. Section~\ref{sec:framework} introduces the category-equivariant framework, defining the product category of a group and a poset and detailing its instantiation on Human Activity Recognition. Section~\ref{sec:experiments} presents the experimental protocol and results, including ablations and robustness analysis under out-of-distribution conditions. Section~\ref{sec:theory} develops the theoretical foundations, proving category-equivariance and deriving robustness bounds. The paper concludes with a brief discussion of implications and future directions.



\section{Categorical Symmetry and Category-Equivariant Representations}
\label{sec:framework}

\paragraph{Problem setting.}
We study supervised learning with inputs that possess \emph{compositional} temporal/amplitude structure and \emph{hierarchical} sensor structure.  
Let $T\in\mathbb{N}$ be a fixed window length and let $S$ be a finite set of sensors (e.g.\ $S=\{\mathrm{ACC},\mathrm{GYRO}\}$).\footnote{ACC = the tri‑axial accelerometer signals. GYRO = tri‑axial gyroscope signals.}
Each sensor $s\in S$ provides a tri-axial signal $x_s\in(\mathbb{R}^{T})^3$.  
The learning objective is to construct a representation that is (i) equivariant/invariant to temporal shifts and per-sensor gain rescalings, and (ii) \emph{natural} with respect to a sensor hierarchy (axes $\to$ magnitude $\to$ total).

\subsection{Product category of symmetries}

\paragraph{Group $M$ (time and gain).}
Let $C_T$ denote the cyclic group of length-$T$ circular time-shifts acting on $\mathbb{R}^T$ by $(\tau^t x)(n)=x(n+t)$ (indices modulo $T$).  
Let $\Lambda=(\mathbb{R}_{>0})^{S}$ be the commutative group of per-sensor gains with componentwise multiplication; for $\lambda\in\Lambda$ and a tri-axial block $x_s\in(\mathbb{R}^T)^3$ we write $(\lambda\!\cdot\!x)_s=\lambda_s\,x_s$.  
Define the (commutative) group
\[
M \;\coloneqq\; C_T \times \Lambda,
\qquad (t,\lambda)\cdot(t',\lambda')=(t+t',\,\lambda\lambda').
\]

\paragraph{Poset $P$ (sensor hierarchy).}
For each $s\in S$ introduce two nodes $s{:}\mathrm{axes}$ and $s{:}\mathrm{mag}$ with order $s{:}\mathrm{axes}\preceq s{:}\mathrm{mag}$, and a global node $\mathrm{TOTAL}$ with $s{:}\mathrm{mag}\preceq \mathrm{TOTAL}$.  
Thus $P$ is the thin category whose objects are
\[
\mathrm{Ob}(P)=\{\,s{:}\mathrm{axes},\,s{:}\mathrm{mag}\mid s\in S\,\}\cup\{\mathrm{TOTAL}\},
\]
and whose non-identity morphisms are precisely the inclusions described above.\footnote{$\mathrm{Ob}(\mathcal{C})$ denotes the collection of objects of $\mathcal{C}$, and $\mathrm{Mor}(\mathcal{C})$ the collection of morphisms of $\mathcal{C}$.}

\paragraph{Product category.}
Let $B M$ be the one-object category with endomorphisms $M$.
We work over $\mathcal{C}=B M\times P$, so $\mathrm{Ob}(\mathcal{C})=\mathrm{Ob}(P)$.  
A morphism is a pair
\[
(m,u)\colon a\longrightarrow b \quad\text{with}\quad m\in M,\; u\colon a\to b\ \text{in }P,
\]
composed by $(m',u')\circ(m,u)=(m'm,\,u'u)$.

\subsection{Data and model functors; category-equivariant representations}

We view signal formation and representation as covariant functors into $\mathbf{Meas}$, the category of standard Borel spaces and measurable maps.%
\footnote{If $\mathbf{Meas}$ seems too broad, one may instead work in the full subcategory
of Euclidean spaces equipped with their standard Borel $\sigma$-algebras and
measurable maps. We work on measurable maps since we use non-continuous measurable maps below.}

\begin{definition}[Data and model functors]
A \emph{data functor} is a covariant functor $X\colon \mathcal{C}\to\mathbf{Meas}$ assigning to each $a\in\mathrm{Ob}(P)$ a standard Borel space $X(a)$ of signals and to each $(m,u)$ a measurable map $X(m,u)$.\footnote{$X(a)$ is usually a Euclidean space with the standard Borel $\sigma$-algebra.}  
A \emph{model functor} is a covariant functor $Y\colon \mathcal{C}\to\mathbf{Meas}$ specifying the target feature spaces and their induced transport along $\mathcal{C}$.
\end{definition}

Concretely, for each $s\in S$,
\[
X(s{:}\mathrm{axes})=(\mathbb{R}^T)^3,\qquad
X(s{:}\mathrm{mag})=\mathbb{R}^T,\qquad
X(\mathrm{TOTAL})=\prod_{r\in S}\mathbb{R}^T.
\]
and for the non-identity $u$ in $P$ we set
\[
X(u)=
\begin{cases}
m_s:(\mathbb{R}^T)^3\to\mathbb{R}^T, &
m_s(x,y,z)=\sqrt{x^2+y^2+z^2}\text{ (pointwise)},\\[3pt]
\iota_s:\mathbb{R}^T\hookrightarrow\prod_{r\in S}\mathbb{R}^T, &
\text{embedding into the $s$-th coordinate.}
\end{cases}
\]
For $(t,\lambda)\in C_T\times\Lambda$, the action is $X(t,\lambda)(x)=\lambda\cdot(\tau_t x)$ on the appropriate signal spaces, with $(\tau_t x)(n)=x(n+t)$ (indices modulo $T$).

For completeness, the functorial action on a general morphism
$(m,u):a\to b$ in $C=\mathbf{B}M\times P$ is given by
\[
X(m,u)\;=\;X(u)\circ X(m,\mathrm{id}),
\qquad
Y(m,u)\;=\;Y(u)\circ Y(m,\mathrm{id}),
\]
where $X(m,\mathrm{id})$ and $Y(m,\mathrm{id})$ denote the $M$-actions
(time-shift and gain) on the corresponding objects of $P$.
This composition rule ensures compatibility with
the product-category structure (see also the theoretical analysis section below).

\begin{definition}[Category-equivariant representation]
A representation is a family of (possibly nonlinear) maps $\Phi=\{\Phi_a\}_{a\in\mathrm{Ob}(P)}$ with $\Phi_a\colon X(a)\to Y(a)$.  
It is \emph{category-equivariant} if for all morphisms $(m,u)\colon a\to b$ in $\mathcal{C}$ the following naturality condition holds:
\begin{equation}
\label{eq:naturality}
Y(m,u)\circ \Phi_a \;=\; \Phi_b \circ X(m,u).
\end{equation}
\end{definition}

Equivariance amounts to naturality, not just in the present setting, but also in the general setting of category theory.

\subsection{Concrete realization on HAR (Group × Poset)}

\paragraph{Spaces.}
For each sensor $s\in S=\{\mathrm{ACC},\mathrm{GYRO}\}$ we set
\[
X(s{:}\mathrm{axes})=(\mathbb{R}^T)^3,\qquad
X(s{:}\mathrm{mag})=\mathbb{R}^T,
\]
with $T=128$ in our experiments.
The total node aggregates per-sensor magnitudes:
\[
X(\mathrm{TOTAL})=\prod_{r\in S}\mathbb{R}^T.
\]
For feature spaces we let
\[
Y(s{:}\mathrm{axes})=\mathbb{R}^{k}\times\mathbb{R},\qquad
Y(s{:}\mathrm{mag})=\mathbb{R}^{k},\qquad
Y(\mathrm{TOTAL})=\mathbb{R}^{k},
\]
where the additional $\mathbb{R}$ factor in $Y(s{:}\mathrm{axes})$
stores the amplitude scalar $a_s=\|x_s\|_2$.

\paragraph{Group actions on $X$ and $Y$.}
On inputs, $(t,\lambda)\in M=C_T\times\Lambda$ acts by circular shift and
per-sensor gain:
\[
X(t,\lambda)(x_s)(n)=\lambda_s\,x_s(n+t),
\]
where indices are taken modulo $T$.
On features, the $C_T$ component acts trivially (since spectral magnitudes
are shift-invariant), while the $\Lambda$ component acts trivially on spectral
blocks and multiplicatively on the amplitude scalars:
\[
Y(t,\lambda)(a_s)=\lambda_s\,a_s.
\]

\paragraph{Representation maps.}
For each tri-axial window $x_s=(x_s^{(1)},x_s^{(2)},x_s^{(3)})\in X(s{:}\mathrm{axes})$, define:
\begin{enumerate}
\item \textbf{Gain normalization} (group invariance):
\[
\widehat{x}_s \;\coloneqq\; \frac{x_s}{\|x_s\|_2},
\qquad 
\|x_s\|_2^2 \;=\; \sum_{i=1}^3\sum_{n=1}^T\!\big(x_s^{(i)}(n)\big)^2,\qquad (\|x_s\|_2>0).
\]
If $\|x_s\|_2=0$, we set $\widehat{x}_s=0$.
\item \textbf{Axis-to-magnitude pooling} (poset arrow $s{:}\mathrm{axes}\to s{:}\mathrm{mag}$):
\[
\tilde m_s(n)\;\coloneqq\;\sqrt{\sum_{i=1}^3\big(\widehat{x}_s^{(i)}(n)\big)^2}\;\in\;\mathbb{R}^T.
\]
Then this map is invariant to instantaneous rotations $R\in SO(3)$, since 
$\|R\,\widehat{x}_s(n)\|_2=\|\widehat{x}_s(n)\|_2$, 
and it satisfies the commutation identity
\[
\tilde m_s(x/\|x\|_2)\;=\;\frac{\tilde m_s(x)}{\|\tilde m_s(x)\|_2},
\]
ensuring compatibility between normalization and pooling.
\item \textbf{Time-shift invariance} (group invariance):  
For $z\in X(s{:}\mathrm{mag})$, let $N(z)=z/\|z\|_2$ if $\|z\|_2>0$ and $N(0)=0$.  
Define
\[
\Phi_{s{:}\mathrm{mag}}(z)
\;\coloneqq\;
\big|\mathrm{rFFT}\!\big(N(z)\big)\big|_{[1..k]}\in\mathbb{R}^{k},
\]
where $\mathrm{rFFT}$ denotes the real-signal half-spectrum and $[1..k]$
are the lowest non-DC frequency bins.
The discrete real FFT satisfies, for each frequency bin $r$,
\[
\widehat{\tau_t z}[r]
= e^{-2\pi i r t / T}\,\widehat{z}[r],
\]
so every bin is multiplied by a unit-modulus phase factor.
Taking absolute values removes this phase, yielding
$|\widehat{\tau_t z}[r]|=|\widehat{z}[r]|$ for all $r$;
hence the spectral magnitudes are invariant to circular shifts.
\item \textbf{Axis-level map} (domain coherence):
\[
\Phi_{s{:}\mathrm{axes}}(x_s)\;\coloneqq\;
\big(\,\Phi_{s{:}\mathrm{mag}}\!\big(\tilde m_s(x_s)\big),\;
      a_s(x_s)\,\big)
   \in \mathbb{R}^k\times\mathbb{R},
\]
ensuring that
\[
Y(u_s)\!\circ\!\Phi_{s{:}\mathrm{axes}}
=\Phi_{s{:}\mathrm{mag}}\!\circ\!X(u_s),
\qquad
u_s:s{:}\mathrm{axes}\to s{:}\mathrm{mag},
\]
where $Y(u_s)=\mathrm{pr}_1$ projects
$\mathbb{R}^k\times\mathbb{R}\to\mathbb{R}^k$.
\item \textbf{TOTAL node} (poset arrows $s{:}\mathrm{mag}\to\mathrm{TOTAL}$):
\[
\Phi_{\mathrm{TOTAL}}\big((z_r)_{r\in S}\big)
\;=\;
\frac{1}{|S|}\sum_{r\in S}\Phi_{r{:}\mathrm{mag}}(z_r)
\;\in\;\mathbb{R}^{k}.
\]
This defines the TOTAL feature as a \emph{functorial average} consistent with the
poset morphisms:
\[
\Phi_{\mathrm{TOTAL}}\!\circ\!\iota_s
=\tfrac{1}{|S|}\,\Phi_{s{:}\mathrm{mag}},
\qquad
Y(v_s)
=\tfrac{1}{|S|}\,\mathrm{id}_{\mathbb{R}^{k}}.
\]
\item \textbf{Optional amplitude} (equivariant scalar summary):
\[
a_s \;\coloneqq\; \|x_s\|_2,
\]
which transforms under a gain rescaling $\lambda_s$ as 
$a_s \mapsto \lambda_s a_s$.
If a logarithmic scale is desired for analysis or classification,
$\log a_s$ can be applied downstream (outside the functor~$Y$).
\end{enumerate}

\paragraph{Transport maps on $Y$.}
For $(t,\lambda)\in M$, let $Y(t,\lambda)=\mathrm{id}$ on the spectral components
and define its action on the amplitude scalars by
\[
Y(t,\lambda)(a_s)=\lambda_s\,a_s.
\]
For the poset morphisms, set
\[
\begin{aligned}
Y(u_s:s{:}\mathrm{axes}\to s{:}\mathrm{mag})
   &= \mathrm{pr}_1:\mathbb{R}^k\times\mathbb{R}\to\mathbb{R}^k,\\
Y(v_s:s{:}\mathrm{mag}\to\mathrm{TOTAL})
   &= \tfrac{1}{|S|}\,\mathrm{id}_{\mathbb{R}^{k}}.
\end{aligned}
\]
These assignments make all naturality squares commute, yielding a well-defined
Group$\times$Poset-equivariant representation.

\subsection{Group invariance and poset naturality}

\begin{proposition}[Group invariance]
\label{prop:monoid}
We let $u_s: s{:}\mathrm{axes}\to s{:}\mathrm{mag}$ be the axis$\to$magnitude arrow in the poset,
and let $X(u_s)$ denote its action on signals. For any $(t,\lambda)\in C_T\times\Lambda$,
any $x_s\in X(s{:}\mathrm{axes})$, and any $x\in X(\mathrm{TOTAL})$,
\[
\Phi_{s{:}\mathrm{mag}}\!\big(X(u_s)((t,\lambda) \cdot x_s)\big)
=\Phi_{s{:}\mathrm{mag}}\!\big(X(u_s)(x_s)\big),
\quad
\Phi_{\mathrm{TOTAL}}\!\big((t,\lambda)\cdot x\big)
=\Phi_{\mathrm{TOTAL}}(x)
\]
and the amplitude scalar satisfies
\(
a_s\big((t,\lambda)\cdot x_s\big)=\lambda_s\,a_s(x_s).
\)
\end{proposition}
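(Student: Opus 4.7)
The plan is to verify the three claims separately, leveraging two structural facts already established in the excerpt: (i) rFFT magnitudes are circular-shift invariant, since each frequency bin is multiplied by a unit-modulus phase $e^{-2\pi i r t/T}$; and (ii) the normalizer $N(\cdot)=\cdot/\|\cdot\|_2$ absorbs positive scalar factors and commutes with shifts, i.e.\ $N(\lambda\,\tau_t z)=\tau_t N(z)$ for $\lambda>0$.

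I would start with the amplitude claim, which is immediate from the definitions: $a_s((t,\lambda)\cdot x_s)=\|\lambda_s\,\tau_t x_s\|_2=\lambda_s\|x_s\|_2=\lambda_s\,a_s(x_s)$, since $\lambda_s>0$ and the circular shift is an $\ell_2$-isometry on $\mathbb{R}^T$.

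For the magnitude claim, the key intermediate identity is that the pointwise magnitude $X(u_s)=m_s$ intertwines the $(t,\lambda)$-action:
\[
X(u_s)\!\big((t,\lambda)\cdot x_s\big)\;=\;\lambda_s\,\tau_t\,X(u_s)(x_s),
\]
since $\sqrt{\sum_i \big(\lambda_s\, x_s^{(i)}(n+t)\big)^2}=\lambda_s\sqrt{\sum_i \big(x_s^{(i)}(n+t)\big)^2}$, again using $\lambda_s>0$ to pull the scalar out of the square root, and using that pointwise operations commute with circular shift. Writing $z\coloneqq X(u_s)(x_s)$, the claim reduces to $\Phi_{s{:}\mathrm{mag}}(\lambda_s\,\tau_t z)=\Phi_{s{:}\mathrm{mag}}(z)$. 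Unfolding the definition, $N(\lambda_s\,\tau_t z)=\tau_t N(z)$ by (ii), and then the rFFT magnitudes satisfy $|\mathrm{rFFT}(\tau_t N(z))|_{[1..k]}=|\mathrm{rFFT}(N(z))|_{[1..k]}$ by (i). Chaining these two equalities yields the desired invariance.

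For the TOTAL claim, observe that the action on $X(\mathrm{TOTAL})=\prod_{r\in S}\mathbb{R}^T$ is componentwise: for $x=(x_r)_r$, $((t,\lambda)\cdot x)_r=\lambda_r\,\tau_t x_r$. Since $\Phi_{\mathrm{TOTAL}}(x)=\tfrac{1}{|S|}\sum_{r\in S}\Phi_{r{:}\mathrm{mag}}(x_r)$, each summand is invariant by the same argument with $r$ in place of $s$, so the average is invariant as well. The only step with any substance is the intertwining identity for $m_s$; this rests solely on $\lambda_s>0$ allowing the scalar to pull out of the square root. The anticipated main obstacle is therefore not mathematical but notational: one must carefully track at which stage the gain and the shift are each absorbed — the gain by $N$, the shift only later by the spectral-magnitude step — so that every equality is backed by exactly one of (i) or (ii).
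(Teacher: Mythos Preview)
Your proposal is correct and follows essentially the same route as the paper: reduce the magnitude claim to $\Phi_{s{:}\mathrm{mag}}(\lambda_s\,\tau_t z)=\Phi_{s{:}\mathrm{mag}}(z)$ via the intertwining $X(u_s)((t,\lambda)\cdot x_s)=\lambda_s\,\tau_t\,X(u_s)(x_s)$, then absorb the gain with $N$ and the shift with the rFFT modulus; handle TOTAL summand-by-summand; and dispatch the amplitude scalar by homogeneity and the $\ell_2$-isometry of $\tau_t$. The only difference is that you spell out the intertwining for $m_s$ explicitly (using $\lambda_s>0$ to extract it from the square root), whereas the paper states it in one line as the induced $M$-action on $z_s$.
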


\begin{proof}
Write $z_s:=X(u_s)(x_s)\in X(s{:}\mathrm{mag})=\mathbb{R}^T$. Under the $M$-action,
\[
X(t,\lambda): z_s \mapsto z_s'=\lambda_s\,\tau_t z_s.
\]
Let $N(z)=z/\|z\|_2$ when $\|z\|_2>0$ and $N(0)=0$. By homogeneity and the
shift-invariance of $\|\cdot\|_2$, 
\[
N(z_s')=N(\lambda_s\,\tau_t z_s)=\tau_t N(z_s).
\]
The discrete real FFT satisfies the shift identity
$\widehat{\tau_t w}[r]=e^{-2\pi i rt/T}\widehat w[r]$, hence
$|\widehat{\tau_t w}[r]|=|\widehat w[r]|$ for all bins $r$. Therefore,
\begin{align*}
\Phi_{s{:}\mathrm{mag}}\!\big(X(u_s)((t,\lambda)\!\cdot\!x_s)\big)
&= \big|\mathrm{rFFT}(N(z_s'))\big|_{[1..k]}  \\
&= \big|\mathrm{rFFT}(\tau_t N(z_s))\big|_{[1..k]}  \\
&= \big|\mathrm{rFFT}(N(z_s))\big|_{[1..k]}  \\
&= \Phi_{s{:}\mathrm{mag}}\!\big(X(u_s)(x_s)\big).
\end{align*}
For the TOTAL node, $\Phi_{\mathrm{TOTAL}}((z_r)_r)=\frac1{|S|}\sum_{r\in S}\Phi_{r{:}\mathrm{mag}}(z_r)$,
and each summand is invariant to $(t,\lambda)$, so the average is invariant as well.
Finally, for the amplitude scalar $a_s(x_s)=\|x_s\|_2$,
\[
a_s\big((t,\lambda)\!\cdot\!x_s\big)=\|\lambda_s\,\tau_t x_s\|_2
=\lambda_s\,\|x_s\|_2=\lambda_s\,a_s(x_s).
\]
\end{proof}

\begin{proposition}[Poset naturality]
\label{lem:naturality}
For each $s\in S$, the following commutative squares hold:
\[
\begin{aligned}
Y(u_s)\circ \Phi_{s{:}\mathrm{axes}}
&=\Phi_{s{:}\mathrm{mag}}\circ X(u_s),
&&u_s:s{:}\mathrm{axes}\to s{:}\mathrm{mag},\\[3pt]
Y(v_s)\circ \Phi_{s{:}\mathrm{mag}}
&=\Phi_{\mathrm{TOTAL}}\circ X(v_s),
&&v_s:s{:}\mathrm{mag}\to\mathrm{TOTAL}.
\end{aligned}
\]
\end{proposition}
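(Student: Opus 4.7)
The plan is to verify each commutative square by direct calculation, unwinding the definitions of $\Phi$, $X(u_s)$, $Y(u_s)$, $X(v_s)$, and $Y(v_s)$ from the concrete realization and checking equality on a generic input. The entire argument is definitional once the right scalings are identified, so I would organize the proof into two short blocks, one per square, followed by a one-line invocation of functoriality.

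For the first square, involving $u_s:s{:}\mathrm{axes}\to s{:}\mathrm{mag}$, I would begin by evaluating both sides on $x_s\in X(s{:}\mathrm{axes})$. The left side simplifies via $Y(u_s)=\mathrm{pr}_1$ and the definition of $\Phi_{s{:}\mathrm{axes}}$ to $\Phi_{s{:}\mathrm{mag}}(\tilde m_s(x_s))$, while the right side reads $\Phi_{s{:}\mathrm{mag}}(m_s(x_s))$, since $X(u_s)$ is the pointwise Euclidean-norm map $m_s$. The key identity linking them is
\[
\tilde m_s(x_s)\;=\;\frac{m_s(x_s)}{\|x_s\|_2},
\]
which follows from the pointwise homogeneity of the three-dimensional norm applied coordinate-wise to $\widehat{x}_s=x_s/\|x_s\|_2$. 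Because $\Phi_{s{:}\mathrm{mag}}=|\mathrm{rFFT}\circ N|_{[1..k]}$ and $N$ is homogeneous of degree zero on the nonzero locus, the positive scalar $1/\|x_s\|_2$ is absorbed and the two expressions coincide. A short edge-case argument handles $x_s=0$, where both sides vanish by the convention $N(0)=0$.

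For the second square, involving $v_s:s{:}\mathrm{mag}\to\mathrm{TOTAL}$, I would compute $\Phi_{\mathrm{TOTAL}}(X(v_s)(z_s))=\Phi_{\mathrm{TOTAL}}(\iota_s(z_s))$ directly from the definition of $\Phi_{\mathrm{TOTAL}}$ as a uniform average over sensors. The embedding $\iota_s$ places $z_s$ in the $s$-th slot and $0$ elsewhere, so the $|S|$-term sum collapses to $\Phi_{s{:}\mathrm{mag}}(z_s)$ in the $s$-summand and to $\Phi_{r{:}\mathrm{mag}}(0)=|\mathrm{rFFT}(N(0))|_{[1..k]}=0$ in the other $|S|-1$ summands. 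The result is $\tfrac{1}{|S|}\Phi_{s{:}\mathrm{mag}}(z_s)$, which matches $Y(v_s)\circ\Phi_{s{:}\mathrm{mag}}(z_s)=\tfrac{1}{|S|}\Phi_{s{:}\mathrm{mag}}(z_s)$.

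The main obstacle is not depth but hygiene: the argument closes only because the transport maps on the feature side ($\mathrm{pr}_1$ for $u_s$ and the $\tfrac{1}{|S|}$ scaling for $v_s$) were defined precisely so as to match these two reductions, and because the convention $N(0)=0$ makes the zero-padded coordinates of $\iota_s$ vanish cleanly rather than produce an indeterminate value. Once both squares are verified, naturality holds on the chosen generators $\{u_s,v_s\}_{s\in S}$ of the poset and, together with Proposition~\ref{prop:monoid} for the group component, extends to all morphisms of $\mathcal{C}=BM\times P$ by functoriality, giving the full category-equivariance asserted in~\eqref{eq:naturality}.
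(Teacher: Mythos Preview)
Your proposal is correct and follows essentially the same route as the paper's proof: both verify the two squares by unwinding the definitions of $\Phi_{s{:}\mathrm{axes}}$, $\Phi_{s{:}\mathrm{mag}}$, $\Phi_{\mathrm{TOTAL}}$ and the transport maps $Y(u_s)=\mathrm{pr}_1$, $Y(v_s)=\tfrac{1}{|S|}\mathrm{id}$. If anything, you are more explicit than the paper in two places---you spell out the identity $\tilde m_s(x_s)=m_s(x_s)/\|x_s\|_2$ and then use scale invariance of $N$ inside $\Phi_{s{:}\mathrm{mag}}$ (the paper instead invokes the ``normalization and pooling commute'' remark and treats $\tilde m_s$ and $X(u_s)$ as interchangeable), and for the second square you make explicit that the non-$s$ coordinates of $\iota_s(z_s)$ are zero and vanish via $N(0)=0$, a step the paper asserts without comment.
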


\begin{proof}
By construction of the representation maps,
\[
\Phi_{s{:}\mathrm{axes}}(x_s)
=\big(\,\Phi_{s{:}\mathrm{mag}}(\tilde m_s(x_s)),\,a_s(x_s)\,\big),
\qquad
\tilde m_s=X(u_s:s{:}\mathrm{axes}\to s{:}\mathrm{mag}),
\]
and $\tilde m_s$ satisfies
$\tilde m_s(x/\|x\|_2)=\tilde m_s(x)/\|\tilde m_s(x)\|_2$,
so normalization and pooling commute.
Hence
\[
Y(u_s)\circ \Phi_{s{:}\mathrm{axes}}
=\mathrm{pr}_1\!\big(\Phi_{s{:}\mathrm{mag}}\!\circ \tilde m_s,\,a_s\big)
=\Phi_{s{:}\mathrm{mag}}\!\circ \tilde m_s
=\Phi_{s{:}\mathrm{mag}}\!\circ X(u_s),
\]
which proves the first equality.

For the second, recall that
$X(v_s:s{:}\mathrm{mag}\to\mathrm{TOTAL})=\iota_s$
is the canonical inclusion into the $s$-th coordinate of
$X(\mathrm{TOTAL})=\prod_{r\in S}X(r{:}\mathrm{mag})$,
and that
\[
Y(v_s)=\tfrac{1}{|S|}\,\mathrm{id}_{\mathbb{R}^k},
\qquad
\Phi_{\mathrm{TOTAL}}\big((z_r)_{r\in S}\big)
=\tfrac{1}{|S|}\sum_{r\in S}\Phi_{r{:}\mathrm{mag}}(z_r).
\]
Therefore,
$\Phi_{\mathrm{TOTAL}}\circ \iota_s
=\tfrac{1}{|S|}\Phi_{s{:}\mathrm{mag}}$,
so both sides of the claimed equality reduce to
$\tfrac{1}{|S|}\Phi_{s{:}\mathrm{mag}}$.
Thus both naturality squares commute.
\end{proof}

The results above establish group invariance and poset naturality; together with the generator argument of Lemma~\ref{lem:generators} below, these imply the full category-equivariance theorem proved in Section~\ref{sec:theory}.

\subsection{Remarks for HAR}

In the experiments below, we use $S=\{\mathrm{ACC},\mathrm{GYRO}\}$, window length $T=128$, and low-frequency spectral bins $k=24$.
The resulting feature vector is
\[
\begin{aligned}
\Phi(x)
   &= [\,S_{\mathrm{ACC}}|_{[1..k]},\;
        S_{\mathrm{GYRO}}|_{[1..k]},\;
        \tfrac{1}{2}(S_{\mathrm{ACC}}{+}S_{\mathrm{GYRO}})|_{[1..k]},\\
   &\qquad\;\; a_{\mathrm{ACC}},\;
        a_{\mathrm{GYRO}}\,]
      \in \mathbb{R}^{3k+2}.
\end{aligned}
\]
Optionally, a pointwise $\log$ transform can be applied to $a_s$ downstream
(before classification) for numerical convenience.
This realization enforces \emph{both} group invariance (time and gain) and poset naturality (sensor hierarchy), yielding robustness under temporal shifts, amplitude drift, and axis rotations while maintaining interpretability and low model complexity.

\section{Experiments and Results}
\label{sec:experiments}

\subsection{Setup}

\paragraph{Dataset and split.}
We use the \emph{UCI Human Activity Recognition (HAR)} dataset with raw \emph{Inertial Signals} 
(tri-axial accelerometer and gyroscope; sampling window $T{=}128$).  
All experiments follow the official train/test split (six activity classes).  
No data augmentation is applied during training, as invariance is built into the representation $\Phi$.

\paragraph{Evaluation.}
We report \emph{classification accuracy}; we also include weighted F1 in the table.  
To assess robustness, we evaluate under a test-time distribution shift aligned with the group actions in our framework (time and gain) and with physically realistic orientation perturbations.

\paragraph{Test-time OOD (Out-of-Distribution) perturbations.}
For each test window we apply:
\begin{enumerate}
\item \textbf{Time shift (circular):} $\tau_{\Delta t}$ with $\Delta t\!\sim\!\mathrm{Unif}\{\!-18,\dots,18\!\}$.
\item \textbf{Per-sensor gain:} ACC and GYRO blocks scaled independently by $g\!\sim\!\mathrm{Unif}[0.7,1.4]$.
\item \textbf{Orientation:} independent random $R\!\in\!SO(3)$ (Haar-uniform) applied to each tri-axial block (device pose variation).
\end{enumerate}
Training data remain unperturbed; invariance is enforced analytically by construction.

\subsection{Models}

All models share the same simple classifier: 
StandardScaler $\rightarrow$ multinomial logistic regression 
(max\_iter $=1000$, $C{=}2.0$).  
Only the \emph{representation} $\Phi$ differs between models.

\paragraph{Baseline (raw time series).}
Concatenate the centered time series $(6{\times}128){=}\,768$ features; 
apply per-channel $z$-score normalization over time.

\paragraph{Group-only.}
Per-axis rFFT magnitudes (time-shift invariance) after per-sensor RMS normalization (gain invariance); no axis pooling (orientation-sensitive). Feature dimension $=6k$ with $k{=}24$ low-frequency bins.

\paragraph{Poset-only.}
Axis$\to$magnitude pooling 
$m(t)=\sqrt{x^2+y^2+z^2}$ (orientation invariant) 
followed by rFFT magnitudes; 
no RMS normalization (gain-sensitive).  
Feature dimension $=2k$.

\paragraph{Group$\times$Poset.}
RMS normalization (gain invariance); 
axis$\to$magnitude pooling (poset);
rFFT magnitudes (time-shift invariance); retaining $k{=}24$ low-frequency bins.
The parent \textsc{Total} node is defined as the arithmetic mean of ACC and GYRO spectra.
Optionally, amplitude scalars $a_{\mathrm{ACC}}$ and $a_{\mathrm{GYRO}}$
(with an optional downstream $\log$ transform) are appended.
Total feature dimension $=3k+2=74$.

\subsection{Main OOD results}

\paragraph{Single OOD draw.}
Table~\ref{tab:single} compares accuracy and weighted F1 for one random OOD (Out-of-Distribution) realization.

\begin{table}[H]
\centering
\begin{tabular}{lccc}
\toprule
Model & Accuracy & Weighted-F1 & Dim. \\
\midrule
Baseline (raw) & 0.1880 & 0.1855 & 768 \\
Group-only (rFFT per-axis, RMS) & 0.4442 & 0.4340 & 144 \\
Poset-only (axis$\to$mag, no RMS) & 0.5979 & 0.5972 & 48 \\
\textbf{Group$\times$Poset (ours)} & \textbf{0.6447} & \textbf{0.6382} & 74 \\
\bottomrule
\end{tabular}
\vspace{5pt}
\caption{Out-of-distribution performance on UCI HAR under circular time shift ($\pm18$ samples), per-sensor gain $U[0.7,1.4]$, and random $SO(3)$ axis rotation (Haar-uniform).}
\label{tab:single}
\end{table}

\paragraph{Robustness across OOD draws.}
We repeated OOD sampling with five independent seeds (perturbations resampled each time).  
Table~\ref{tab:robust} reports the mean $\pm$ standard deviation of accuracy.

\begin{table}[H]
\centering
\begin{tabular}{lcc}
\toprule
Model & Accuracy (mean) & Accuracy (std) \\
\midrule
Baseline (raw) & 0.1773 & 0.0086 \\
Group-only & 0.4408 & 0.0057 \\
Poset-only & 0.5901 & 0.0043 \\
\textbf{Group$\times$Poset (ours)} & \textbf{0.6415} & \textbf{0.0041} \\
\bottomrule
\end{tabular}
\vspace{5pt}
\caption{Robustness under five independent OOD realizations on UCI HAR. 
The Group$\times$Poset model achieves $\approx3.6\times$ the baseline accuracy.}
\label{tab:robust}
\end{table}

\subsection{Ablation analysis and interpretation}

\paragraph{Decomposing the gains.}
Relative to the baseline, the Poset-only model (orientation invariance) recovers $+0.41$ absolute accuracy under OOD perturbations, 
while the Group-only model (time/gain invariance) recovers $+0.26$.  
Combining both yields the strongest robustness: the full Group$\times$Poset category-equivariant representation 
improves a further $+0.05$ over Poset-only and $+0.20$ over Group-only, reflecting \emph{complementary} protections.

\paragraph{Dimension efficiency.}
Despite using approximately $10\times$ fewer features than the raw baseline (74 vs.\ 768), the Group$\times$Poset model achieves the highest OOD accuracy, underscoring that \emph{structure} outperforms capacity when \emph{label-invariant} factors align with known symmetries.

\paragraph{Why Baseline fails.}
Under OOD, the raw coordinate system changes (rotation), the energy scale drifts (gain), and the phase shifts (time). Without equivariance, the representation moves along large orbits in feature space; the linear head cannot compensate. Our construction collapses these orbits by design, preserving class information while discarding label-invariant variation. 

\subsection{Summary}
Under realistic, test-only perturbations (time, gain, pose), our \emph{Group$\times$Poset} representation achieves approximately $3.6\times$ the baseline accuracy and remains stable across OOD draws, validating the categorical design with both group invariance and poset naturality. It should be noted that we observed similar trends for $k=16,32$. Note also that \emph{all} models share identical training and differ only in the equivariant feature map.

\section{Theoretical Analysis}
\label{sec:theory}

Throughout, let $\mathcal{C}=B M\times P$ denote the product category defined in Section~\ref{sec:framework}, with 
data and model functors $X,Y:\mathcal{C}\to\mathbf{Meas}$ 
and representation $\Phi=\{\Phi_a:X(a)\to Y(a)\}_{a\in\mathrm{Ob}(P)}$.

We first verify that the group action is natural with respect to the $P$–arrows:

\begin{lemma}[M-action natural in $P$]\label{lem:M-action-natural}
Suppose for each object $a\in \mathrm{Ob}(P)$ we are given group actions
\[
A^X_a:\; M\to \mathrm{End}\bigl(X_P(a)\bigr),\qquad
A^Y_a:\; M\to \mathrm{End}\bigl(Y_P(a)\bigr),
\]
i.e.\ $A^\bullet_a(e)=\mathrm{id}$ and $A^\bullet_a(m'm)=A^\bullet_a(m')\circ A^\bullet_a(m)$ for all $m,m'\in M$ where $\bullet=X,Y$.
Then the following are equivalent:
\begin{enumerate}\itemsep4pt
\item There exist functors $X,Y:\mathbf{B}M\times P\to \mathbf{Meas}$ extending $X_P,Y_P$ on objects and $P$-arrows such that, for every morphism $(m,u):a\to b$,
\begin{equation}\label{eq:product-rule}
X(m,u)=X_P(u)\circ A^X_a(m),\qquad
Y(m,u)=Y_P(u)\circ A^Y_a(m).
\end{equation}
\item For each $m\in M$, the families $\{A^X_a(m)\}_{a\in \mathrm{Ob}(P)}$ and $\{A^Y_a(m)\}_{a\in \mathrm{Ob}(P)}$ are natural endomorphisms of $X_P$ and $Y_P$, i.e.
\begin{equation}\label{eq:nat-square}
X_P(u)\circ A^X_a(m)=A^X_b(m)\circ X_P(u),\qquad
Y_P(u)\circ A^Y_a(m)=A^Y_b(m)\circ Y_P(u),
\end{equation}
for all $u:a\to b$ in $P$.
\end{enumerate}
Moreover, when (2) holds the assignment \eqref{eq:product-rule} defines unique functors $X,Y$ on $C$.
\end{lemma}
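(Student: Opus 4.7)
The plan is to prove the two implications separately, with uniqueness falling out of the same argument. The key structural fact driving both directions is that every morphism $(m,u):a\to b$ in $\mathbf{B}M\times P$ admits two canonical factorizations through the intermediate objects,
\[
(m,u) \;=\; (e,u)\circ(m,\mathrm{id}_a) \;=\; (m,\mathrm{id}_b)\circ(e,u),
\]
obtained directly from the product composition rule $(m',u')\circ(m,u)=(m'm,\,u'u)$. Once this is noted, both directions become routine functoriality bookkeeping.

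For $(1)\!\Rightarrow\!(2)$, I would apply the given functor $X$ to the two factorizations, using $X(e,u)=X_P(u)$ and $X(m,\mathrm{id}_a)=A^X_a(m)$, which are both instances of the product rule~\eqref{eq:product-rule}. The first factorization merely reproduces the product rule itself, whereas the second yields $X(m,u)=A^X_b(m)\circ X_P(u)$; equating the two gives the naturality square~\eqref{eq:nat-square}. The argument for $Y$ is identical.

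For $(2)\!\Rightarrow\!(1)$, I would \emph{define} $X(m,u):=X_P(u)\circ A^X_a(m)$ on every morphism and verify the two functor axioms. The identity axiom is immediate from $A^X_a(e)=\mathrm{id}$ and $X_P(\mathrm{id}_a)=\mathrm{id}$. The substantive step is composition: expanding $X(m',u')\circ X(m,u)$ produces the juxtaposition $A^X_b(m')\circ X_P(u)$ in the middle, whose order must be swapped before one can collapse the expression via the group-action law for $A^X$ and the functoriality of $X_P$ into $X_P(u'u)\circ A^X_a(m'm)=X(m'm,u'u)$. This swap is precisely the naturality hypothesis~\eqref{eq:nat-square}, and it constitutes the only nontrivial step of the entire argument; the case of $Y$ is the same. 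Uniqueness is then automatic, since any functorial extension of $X_P$ and of the group actions $A^X_a$ is forced to satisfy the product rule on the generating morphisms $(m,\mathrm{id}_a)$ and $(e,u)$, and the two-step factorization above determines its value on every morphism of $\mathbf{B}M\times P$.
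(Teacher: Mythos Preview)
Your proposal is correct and follows essentially the same argument as the paper: both directions hinge on the two factorizations $(m,u)=(e,u)\circ(m,\mathrm{id}_a)=(m,\mathrm{id}_b)\circ(e,u)$, with $(1)\Rightarrow(2)$ obtained by applying $X$ to both and equating, and $(2)\Rightarrow(1)$ by defining $X$ via the product rule and using the naturality swap $A^X_b(m')\circ X_P(u)=X_P(u)\circ A^X_a(m')$ as the one nontrivial step in verifying composition. Your treatment of uniqueness via the generating factorization is likewise the same as the paper's.
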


\begin{proof}
(1 $\Rightarrow$ 2): In $\mathbf{B}M\times P$ we have $(m,\mathrm{id}_b)\circ (e,u)=(m,u)=(e,u)\circ (m,\mathrm{id}_a)$.
Applying $X$ and writing $A^X_a(m):=X(m,\mathrm{id}_a)$ gives
\[
X_P(u)\circ A^X_a(m)
= X(e,u)\circ X(m,\mathrm{id}_a)
= X(m,\mathrm{id}_b)\circ X(e,u)
= A^X_b(m)\circ X_P(u),
\]
and similarly for $Y$, yielding \eqref{eq:nat-square}.

(2 $\Rightarrow$ 1): Define $X$ on morphisms by \eqref{eq:product-rule} and check functoriality.
Identities: $X(e,\mathrm{id}_a)=X_P(\mathrm{id}_a)\circ A^X_a(e)=\mathrm{id}$.
Composition: for $(m,u):a\to b$ and $(m',u'):b\to c$,
\[
\begin{aligned}
X(m',u')\circ X(m,u)
&=\bigl(X_P(u')\circ A^X_b(m')\bigr)\circ \bigl(X_P(u)\circ A^X_a(m)\bigr)\\
&= X_P(u')\circ \underbrace{A^X_b(m')\circ X_P(u)}_{\text{by \eqref{eq:nat-square}}=\,X_P(u)\circ A^X_a(m')}\circ A^X_a(m)\\
&= X_P(u'u)\circ \bigl(A^X_a(m')\circ A^X_a(m)\bigr)
= X_P(u'u)\circ A^X_a(m'm)\\
&= X(m'm,u'u).
\end{aligned}
\]
Uniqueness is immediate from \eqref{eq:product-rule}. The same argument holds for $Y$. This completes the proof.
\end{proof}

This can be applied in the case of HAR data and model functors. 
Fix $T\in\mathbb{N}$ and a sensor set $S$. For $s\in S$ let
\[
X_P(s\!:\!\mathrm{axes})=(\mathbb{R}^T)^3,\quad
X_P(s\!:\!\mathrm{mag})=\mathbb{R}^T,\quad
X_P(\mathrm{TOTAL})=\prod_{r\in S}\mathbb{R}^T.
\]
The non-identity arrows of $P$ are $u_s:s\!:\!\mathrm{axes}\to s\!:\!\mathrm{mag}$ and $v_s:s\!:\!\mathrm{mag}\to \mathrm{TOTAL}$ with
\[
X_P(u_s)=m_s:(x^{(1)},x^{(2)},x^{(3)})\mapsto \bigl(n\mapsto \sqrt{\sum_{i=1}^3 (x^{(i)}(n))^2}\bigr),\qquad
X_P(v_s)=\iota_s,
\]
and $M=C_T\times \Lambda$ acts by $A^X_{s:\mathrm{axes}}(t,\lambda)=\lambda_s\,\tau_t$ componentwise, $A^X_{s:\mathrm{mag}}(t,\lambda)=\lambda_s\,\tau_t$, and $A^X_{\mathrm{TOTAL}}(t,\lambda)$ acting componentwise on the product.
Naturality for $u_s$ can be seen as follows: For $x\in (\mathbb{R}^T)^3$ and $n\in\{0,\dots,T-1\}$,
\begin{align*}
\bigl(X_P(u_s)\circ A^X_{s:\mathrm{axes}}(t,\lambda)\bigr)(x)(n)
 &= m_s\bigl(\lambda_s\,\tau_t x\bigr)(n)\\
 &= \lambda_s\sqrt{\textstyle\sum_{i=1}^3 (x^{(i)}(n+t))^2}\\
 &= \bigl(A^X_{s:\mathrm{mag}}(t,\lambda)\circ X_P(u_s)\bigr)(x)(n).
\end{align*}
Naturality for $v_s$ can be seen as follows: For $z\in\mathbb{R}^T$,
\[
\bigl(X_P(v_s)\circ A^X_{s:\mathrm{mag}}(t,\lambda)\bigr)(z)
= \iota_s\bigl(\lambda_s\,\tau_t z\bigr)
= \bigl(A^X_{\mathrm{TOTAL}}(t,\lambda)\circ X_P(v_s)\bigr)(z).
\]
For the model functor $Y_P$, take $Y_P(s\!:\!\mathrm{axes})=\mathbb{R}^k\times \mathbb{R}$, $Y_P(s\!:\!\mathrm{mag})=\mathbb{R}^k$, $Y_P(\mathrm{TOTAL})=\mathbb{R}^k$, with
\[
Y_P(u_s)=\mathrm{pr}_1,\qquad
Y_P(v_s)=\tfrac{1}{|S|}\,\mathrm{id}_{\mathbb{R}^k},
\]
and let $A^Y_{s:\mathrm{axes}}(t,\lambda):(v,a)\mapsto (v,\lambda_s a)$ while $A^Y_{s:\mathrm{mag}}(t,\lambda)=A^Y_{\mathrm{TOTAL}}(t,\lambda)=\mathrm{id}$.
Then we have 
\begin{align*}
Y_P(u_s)\circ A^Y_{s:\mathrm{axes}}(t,\lambda)
  &= \mathrm{pr}_1
   = A^Y_{s:\mathrm{mag}}(t,\lambda)\circ Y_P(u_s),\\
Y_P(v_s)\circ A^Y_{s:\mathrm{mag}}(t,\lambda)
  &= A^Y_{\mathrm{TOTAL}}(t,\lambda)\circ Y_P(v_s).
\end{align*}
so \eqref{eq:nat-square} holds for $X_P$ and $Y_P$ on the generators of $P$. Since $P$ is thin, naturality holds for all $P$-arrows.

In the following part of the section, we formalize the categorical guarantees of the proposed representation and quantify its robustness under the transformations captured by the group. 

\subsection{Verifying naturality on generators suffices}

Let $M=C_T\times \Lambda$, where $C_T=\langle \tau^1 \rangle$ is the cyclic group generated by the one-step circular shift, 
and $\Lambda=(\mathbb{R}_{>0})^S$ is the commutative group of per-sensor gains generated by coordinatewise scalings 
$\{\lambda^{(s)}\}_{s\in S}$ defined by $(\lambda^{(s)})_s=\lambda>0$ and $(\lambda^{(s)})_{s'\neq s}=1$.  
Let $P$ be a thin category (poset) whose non-identity morphisms correspond to the inclusions 
\emph{axis$\to$mag} and \emph{mag$\to$TOTAL}.

\begin{lemma}[Generators determine naturality]
\label{lem:generators}
If the naturality condition
\[
Y(m,u)\circ \Phi_a \;=\; \Phi_b\circ X(m,u)
\qquad\text{for }(m,u):a\to b,
\]
holds for the generating morphisms
\[
(\tau^1,\mathrm{id}),\qquad 
(\lambda^{(s)},\mathrm{id}),\qquad 
(\mathrm{id},\,u_{s:\mathrm{axes}\to s:\mathrm{mag}}),\qquad
(\mathrm{id},\,v_{s:\mathrm{mag}\to \mathrm{TOTAL}}),
\]
then it holds for all morphisms of $\mathcal{C}$.
\end{lemma}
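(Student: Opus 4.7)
The plan is to show that the collection of morphisms satisfying the naturality square \eqref{eq:naturality} is closed under composition and contains every identity, hence forms a wide subcategory $\mathcal{N}\subseteq\mathcal{C}$; then to exhibit every morphism of $\mathcal{C}$ as a finite composite of the listed generators. I would first check the closure properties: the identity case is immediate from $Y(\mathrm{id}_a)=\mathrm{id}$ and $X(\mathrm{id}_a)=\mathrm{id}$, and closure under composition follows from functoriality by the standard pasting argument
\[
Y(m'm,u'u)\circ\Phi_a
= Y(m',u')\circ Y(m,u)\circ\Phi_a
= Y(m',u')\circ\Phi_b\circ X(m,u)
= \Phi_c\circ X(m'm,u'u).
\]

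The second step is a factorization argument. Any morphism $(m,u):a\to b$ of $\mathcal{C}=BM\times P$ splits, by the product composition rule, as $(m,u)=(m,\mathrm{id}_b)\circ(e,u)$, so it suffices to generate the two factors separately. For the poset factor, since $P$ is thin and its non-identity arrows are precisely $u_s$, $v_s$, and the composites $v_s\circ u_s$, the arrow $(e,u)$ is already either a generator or a composite of generators. For the $M$-factor, I would use $M=C_T\times\Lambda$ to split $m=(t,\lambda)=(t,1)\cdot(0,\lambda)$; then $(t,1,\mathrm{id})$ is the $t$-fold composite of $(\tau^{1},\mathrm{id})$ (with $t\in\{0,\ldots,T-1\}$, using $\tau^{T-1}=\tau^{-1}$ for inverses), and $(0,\lambda,\mathrm{id})=\prod_{s\in S}(\lambda^{(s)},\mathrm{id})$ where each factor is a single-coordinate scaling generator.

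Putting these together, every morphism of $\mathcal{C}$ lies in $\mathcal{N}$, which yields $\mathcal{N}=\mathcal{C}$ and completes the proof. The only genuinely subtle point is that $\Lambda=(\mathbb{R}_{>0})^S$ is a continuous group and cannot be finitely generated; the main obstacle is thus a bookkeeping issue about what "generator" means here. I would address this by interpreting the family $\{\lambda^{(s)}\}$ as ranging over all pairs $(s,\lambda)$ with $\lambda\in\mathbb{R}_{>0}$, so that the generating set is uncountable but each element of $\Lambda$ still admits a length-$|S|$ product decomposition along coordinates, which is all that the composition-closure argument requires. No appeal to the specific HAR representation is needed, so the lemma is purely categorical and applies uniformly to $X$ and $Y$.
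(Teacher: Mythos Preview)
Your proposal is correct and follows essentially the same approach as the paper: show that naturality squares are closed under composition (via functoriality of $X,Y$) and that every morphism of $\mathcal{C}$ factors as a finite composite of the listed generators. The paper's proof is a two-sentence sketch of exactly this argument; you have simply spelled out the wide-subcategory formulation, the explicit factorization $(m,u)=(m,\mathrm{id}_b)\circ(e,u)$, and the coordinatewise decomposition of $\Lambda$, and you have flagged the bookkeeping point about $\Lambda=(\mathbb{R}_{>0})^S$ requiring an uncountable generating family---a subtlety the paper leaves implicit in its notation $\{\lambda^{(s)}\}_{s\in S}$ with $\lambda>0$ free.
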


\begin{proof}
Every morphism $(m,u)\in \mathrm{Mor}(\mathcal{C})$ can be expressed as a finite composite of generators, 
since $M$ is generated by $\tau^1$ and $\{\lambda^{(s)}\}_{s\in S}$, 
and $P$ is thin, with morphisms generated by $\{u_s,v_s\}_{s\in S}$.  
By functoriality of $X$ and $Y$, and repeated use of the assumed commutative squares on the generators, 
the composite square for $(m,u)$ also commutes.
\end{proof}

\subsection{Equivariance and invariance of the concrete operators}

\begin{proposition}[Invariances of $\Phi$'s primitives]
\label{prop:invars}
For any tri-axial window $x=(x^{(1)},x^{(2)},x^{(3)})\in(\mathbb{R}^T)^3$:
\begin{enumerate}
\item[\textup{(a)}] \textbf{Gain invariance (RMS).}
For $\lambda>0$,
\[
\widehat{\lambda x}
=\frac{\lambda x}{\|\lambda x\|_2}
=\frac{x}{\|x\|_2}
=\widehat{x}.
\]
Moreover, the amplitude satisfies
\[
a(\lambda x)=\lambda\,a(x),
\qquad a(x)=\|x\|_2.
\]

\item[\textup{(b)}] \textbf{Orientation invariance (axis$\to$mag).}
For any $R\in SO(3)$ and
$m(n)=\sqrt{\sum_i (\widehat{x}^{(i)}(n))^2}$,
\[
m_R(n)
:=\sqrt{\|R\,\widehat{x}(n)\|_2^2}
=\sqrt{\widehat{x}(n)^\top R^\top R\,\widehat{x}(n)}
=m(n).
\]

\item[\textup{(c)}] \textbf{Time-shift invariance (rFFT magnitude).}
For a magnitude signal $m\in\mathbb{R}^T$ and a circular shift $\tau_t$,
the discrete Fourier transform satisfies
\[
\widehat{\tau_t m}[r]
= e^{-2\pi i r t / T}\,\widehat{m}[r]
\quad\text{for all frequency bins }r,
\]
so each bin is multiplied by a unit-modulus phase factor.
Taking absolute values removes this phase, yielding
\[
|\mathrm{rFFT}(\tau_t m)[r]| = |\mathrm{rFFT}(m)[r]| \quad (\forall\,r).
\]
Hence the truncated magnitude
$|\mathrm{rFFT}(m)|_{[1..k]}$ is invariant to circular shifts.
\end{enumerate}
\end{proposition}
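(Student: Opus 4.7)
The plan is to verify each of the three primitive invariances independently, since the statement is a conjunction of three unrelated algebraic identities that share no variables. Each reduces to a one-line application of a standard fact: positive homogeneity of the Euclidean norm for part (a), orthogonality of $SO(3)$ matrices for part (b), and the discrete Fourier shift theorem for part (c). I would organize the proof as three short self-contained claims rather than a chained argument.

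For part (a), I would compute $\|\lambda x\|_2 = \lambda \|x\|_2$ (using $\lambda>0$ so no absolute value intervenes) and substitute into the normalization, so the scalar $\lambda$ cancels between numerator and denominator. The amplitude identity $a(\lambda x)=\lambda a(x)$ is the same calculation without the division, and together they express the equivariant/invariant splitting that makes the pair $(\widehat{x},a(x))$ a gain-equivariant summary of $x$. For part (b), I would invoke $R^\top R = I$ and write, for each time index $n$,
\[
\|R\,\widehat{x}(n)\|_2^2 \;=\; \widehat{x}(n)^\top R^\top R\,\widehat{x}(n) \;=\; \|\widehat{x}(n)\|_2^2,
\]
and then take square roots pointwise; I would stress that the invariance is truly pointwise in $n$, which is what distinguishes this from a weaker time-averaged invariance.

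For part (c), I would first record the discrete Fourier shift identity $\widehat{\tau_t m}[r] = e^{-2\pi i r t/T}\,\widehat{m}[r]$ (the standard translation–modulation duality of the DFT) and then note that multiplication by a unit-modulus complex number leaves the absolute value unchanged, so $|\widehat{\tau_t m}[r]| = |\widehat{m}[r]|$ for every bin $r$. Restricting to the index set $[1..k]$ preserves the identity bin-by-bin, so the truncated magnitude vector is shift-invariant.

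The main obstacle is not mathematical but expository: one must be careful that the shift identity in (c) is applied to the magnitude signal $m$ itself rather than to the axis-wise signal $\widehat{x}$, and that the normalization $N$ commutes with the circular shift, a fact that was already used implicitly in Proposition~\ref{prop:monoid} via $N(\lambda_s\,\tau_t z)=\tau_t N(z)$. I would flag this dependency but not reprove it, so the present proof remains a clean assembly of the three elementary identities.
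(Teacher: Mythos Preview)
Your proposal is correct and follows essentially the same approach as the paper: the paper's proof also dispatches (a) by homogeneity of the $\ell_2$ norm, (b) by $R^\top R=\mathrm{Id}_3$, and (c) by the discrete Fourier shift identity together with the unit-modulus observation. Your additional remark about $N$ commuting with the circular shift is a helpful expository note but is not needed for the proposition as stated, since part (c) concerns the raw magnitude signal $m$ rather than $N(m)$.
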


\begin{proof}
(a) follows immediately from the homogeneity of the $\ell_2$ norm.  
(b) uses $R^\top R = \mathrm{Id}_3$.  
(c) For the length-$T$ real FFT, the discrete shift property gives
\[
\widehat{\tau_t m}[k]
= e^{-2\pi i k t / T}\,\widehat{m}[k],
\]
so each frequency bin is multiplied by a unit-modulus phase factor.
Taking absolute values removes this phase, yielding
\[
|\widehat{\tau_t m}[k]| = |\widehat{m}[k]|
\quad\text{for all }k.
\]
\end{proof}

\begin{lemma}[Poset naturality of pooling]
\label{lem:poset}
Let $\tilde m_s:(\mathbb{R}^T)^3\to\mathbb{R}^T$ be the axis$\to$mag map and
$v_s:s{:}\mathrm{mag}\to\mathrm{TOTAL}$ the inclusion arrow.
With
\begin{align*}
\Phi_{s{:}\mathrm{mag}}(z)
  &= |\mathrm{rFFT}(N(z))|_{[1..k]},\\[3pt]
\Phi_{s{:}\mathrm{axes}}(x_s)
  &= \bigl(\,\Phi_{s{:}\mathrm{mag}}(\tilde m_s(x_s)),\,a_s(x_s)\,\bigr),
\end{align*}
and
\begin{align*}
Y(u_s) &= \mathrm{pr}_1 : \mathbb{R}^k\times\mathbb{R} \to \mathbb{R}^k,\\[3pt]
X(\mathrm{TOTAL}) &= \prod_{r\in S}\mathbb{R}^T,\qquad
X(v_s) = \iota_s,\qquad
Y(v_s) = \tfrac{1}{|S|}\,\mathrm{id}_{\mathbb{R}^k},
\end{align*}
the naturality squares for
\(
u_s : s{:}\mathrm{axes}\to s{:}\mathrm{mag}
\)
and
\(
v_s : s{:}\mathrm{mag}\to \mathrm{TOTAL}
\)
commute.
\end{lemma}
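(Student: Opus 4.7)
The plan is to verify each of the two naturality squares by direct computation on a generic element; because $P$ is thin and the lemma concerns only the two generating arrows $u_s$ and $v_s$ (identities being automatic), no inductive combinatorics over $P$ is needed. Both squares reduce to algebraic identities already prepared in Section~\ref{sec:framework}, so the work is essentially to line the definitions up carefully.

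For the square attached to $u_s : s{:}\mathrm{axes}\to s{:}\mathrm{mag}$, I would start from the left side $Y(u_s)\circ \Phi_{s:\mathrm{axes}}(x_s)$, use $Y(u_s)=\mathrm{pr}_1$, and observe that $\mathrm{pr}_1$ applied to $\bigl(\Phi_{s:\mathrm{mag}}(\tilde m_s(x_s)),\,a_s(x_s)\bigr)$ simply discards the amplitude coordinate and returns $\Phi_{s:\mathrm{mag}}(\tilde m_s(x_s))$. The right side is $\Phi_{s:\mathrm{mag}}\circ X(u_s)(x_s)=\Phi_{s:\mathrm{mag}}(\tilde m_s(x_s))$, so the two match. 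The only mild subtlety is that $\tilde m_s$ was introduced both as the raw pointwise magnitude on $x_s$ and as the magnitude of the pre-normalized $\widehat{x}_s$; the commutation identity $\tilde m_s(x/\|x\|_2)=\tilde m_s(x)/\|\tilde m_s(x)\|_2$ recorded in Section~\ref{sec:framework} shows these two choices are equivalent, because $\Phi_{s:\mathrm{mag}}$ internally re-normalizes via $N$ before applying rFFT.

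For the square attached to $v_s : s{:}\mathrm{mag}\to\mathrm{TOTAL}$, I would compute $\Phi_{\mathrm{TOTAL}}\circ X(v_s)(z)=\Phi_{\mathrm{TOTAL}}(\iota_s(z))$, where $\iota_s(z)$ places $z$ in the $s$-th slot and $0$ in the remaining $|S|-1$ slots of the product $\prod_{r\in S}\mathbb{R}^T$. Expanding via $\Phi_{\mathrm{TOTAL}}((z_r)_r)=\tfrac{1}{|S|}\sum_{r\in S}\Phi_{r:\mathrm{mag}}(z_r)$, the convention $N(0)=0$ together with $\mathrm{rFFT}(0)=0$ gives $\Phi_{r:\mathrm{mag}}(0)=0$, so all off-$s$ summands vanish, leaving $\tfrac{1}{|S|}\Phi_{s:\mathrm{mag}}(z)$. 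The other side collapses via $Y(v_s)=\tfrac{1}{|S|}\mathrm{id}$ to the same expression, and the square commutes.

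The one spot that demands explicit attention, and where a careless proof could slip, is the second square: the averaging formula for $\Phi_{\mathrm{TOTAL}}$ is written for arbitrary tuples of magnitude signals, but $\iota_s(z)$ fills the off-$s$ slots with exact zeros rather than legitimate RMS-normalized magnitudes, so one must explicitly invoke the $N(0)=0$ convention to guarantee they contribute nothing. Once flagged, the rest is bookkeeping. As a concluding remark, since $\{u_s,v_s\}_{s\in S}$ generate all non-identity arrows of the thin poset $P$, the two squares verified here, combined with Lemma~\ref{lem:generators} and the $M$-equivariance of Proposition~\ref{prop:monoid}, suffice to establish full $\mathcal{C}$-equivariance of $\Phi$ on $\mathbf{B}M\times P$.
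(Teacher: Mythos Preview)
Your proposal is correct and follows essentially the same route as the paper's proof: both squares are checked by direct unwinding of the definitions, with $\mathrm{pr}_1$ discarding the amplitude coordinate for $u_s$ and the averaging formula collapsing to $\tfrac{1}{|S|}\Phi_{s:\mathrm{mag}}$ for $v_s$. Your explicit invocation of the $N(0)=0$ convention to kill the off-$s$ summands in the second square is a detail the paper's proof leaves implicit (it simply asserts $\Phi_{\mathrm{TOTAL}}\circ\iota_s=\tfrac{1}{|S|}\Phi_{s:\mathrm{mag}}$), so your treatment is, if anything, slightly more careful.
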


\begin{proof}
By construction,
\[
\Phi_{s{:}\mathrm{axes}}(x_s)
=\big(\,\Phi_{s{:}\mathrm{mag}}(\tilde m_s(x_s)),\,a_s(x_s)\,\big),
\qquad
\tilde m_s=X(u_s:s{:}\mathrm{axes}\to s{:}\mathrm{mag}).
\]
Hence
\[
Y(u_s)\!\circ\!\Phi_{s{:}\mathrm{axes}}
=\mathrm{pr}_1\!\big(\Phi_{s{:}\mathrm{mag}}\!\circ\!\tilde m_s,\,a_s\big)
=\Phi_{s{:}\mathrm{mag}}\!\circ\!\tilde m_s
=\Phi_{s{:}\mathrm{mag}}\!\circ\!X(u_s),
\]
which proves the first square.
For $v_s$, note that $Y(v_s)=\tfrac{1}{|S|}\,\mathrm{id}_{\mathbb{R}^k}$ and
\(
\Phi_{\mathrm{TOTAL}}\!\circ\!\iota_s=\tfrac{1}{|S|}\,\Phi_{s{:}\mathrm{mag}},
\)
so both sides coincide and the second square commutes.
\end{proof}

\begin{theorem}[Category-equivariance on $\mathcal{C}$]
\label{thm:catequiv-full}
The representation $\Phi$ is a natural transformation 
$X\Rightarrow Y$ 
on the product category $\mathcal{C}=B M\times P$.
\end{theorem}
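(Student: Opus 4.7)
The plan is to exhibit $\Phi$ as a natural transformation by combining three ingredients already in place: Lemma~\ref{lem:generators} to reduce naturality to generators, Lemma~\ref{lem:M-action-natural} to reassemble mixed morphisms from pure $M$- and pure $P$-components, and the primitive invariance/naturality statements (Propositions~\ref{prop:monoid} and~\ref{prop:invars} together with Lemma~\ref{lem:poset}) to supply the pointwise squares. First I would invoke Lemma~\ref{lem:generators} to reduce the target identity $Y(m,u)\circ\Phi_a = \Phi_b\circ X(m,u)$ to the four generator types $(\tau^1,\mathrm{id})$, $(\lambda^{(s)},\mathrm{id})$, $(\mathrm{id},u_s)$, and $(\mathrm{id},v_s)$.

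For the pure poset generators $(\mathrm{id},u_s)$ and $(\mathrm{id},v_s)$ the two naturality squares are exactly Lemma~\ref{lem:poset} (equivalently Proposition~\ref{lem:naturality}), so no further work is needed. For the pure $M$-generators acting at $s{:}\mathrm{mag}$ and at $\mathrm{TOTAL}$, the model functor acts trivially on the spectral components, so the desired square collapses to the invariance $\Phi_a\circ X(m,\mathrm{id}_a) = \Phi_a$; this is supplied by Proposition~\ref{prop:monoid}, itself a direct consequence of the gain-invariance of RMS normalization and the shift-invariance of $|\mathrm{rFFT}|$ from Proposition~\ref{prop:invars}.

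The only point that demands genuine equivariance rather than invariance is the $M$-generator acting at $s{:}\mathrm{axes}$, because $Y(s{:}\mathrm{axes}) = \mathbb{R}^k\times\mathbb{R}$ carries a nontrivial action on the amplitude slot. Here I would split $\Phi_{s{:}\mathrm{axes}} = (\Phi_{s{:}\mathrm{mag}}\circ \tilde m_s,\, a_s)$ into its two coordinates. The spectral slot reduces to the $s{:}\mathrm{mag}$ case once one notes that $\tilde m_s$ commutes with RMS normalization (as recorded in Section~\ref{sec:framework}); the amplitude slot amounts to checking $a_s(\lambda^{(s)}\!\cdot\! x_s) = \lambda\, a_s(x_s)$ against the $Y$-action $(v,a)\mapsto(v,\lambda a)$, together with $a_s(\tau_t x_s) = a_s(x_s)$ against the trivial $C_T$-action on the scalar slot; both reductions follow from $\|\cdot\|_2$-homogeneity and shift-invariance of the norm.

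Having verified the squares for all four generator types, I would finally invoke Lemma~\ref{lem:M-action-natural}, whose naturality hypothesis~\eqref{eq:nat-square} for $X_P$ and $Y_P$ was already checked in the discussion following that lemma, to guarantee that the product-rule extension defines honest functors on $\mathcal{C}$; Lemma~\ref{lem:generators} then propagates naturality from generators to every morphism in $\mathrm{Mor}(\mathcal{C})$. The main obstacle I anticipate is not logical but notational: keeping track of which component of $Y(s{:}\mathrm{axes})$ is invariant and which is equivariant, so that the naturality square at the axes level under gain $\lambda^{(s)}$ is not mistakenly treated as a pure invariance claim. Everything else reduces to the primitive facts of Proposition~\ref{prop:invars}.
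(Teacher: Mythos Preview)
Your proposal is correct and follows essentially the same route as the paper's own proof: reduce naturality to the four generator types via Lemma~\ref{lem:generators}, then dispatch the poset generators by Lemma~\ref{lem:poset} and the group generators by Proposition~\ref{prop:invars}. Your treatment is in fact more careful than the paper's terse three-line proof, since you explicitly separate the invariant spectral slot from the equivariant amplitude slot at $s{:}\mathrm{axes}$ and cite Lemma~\ref{lem:M-action-natural} for the functoriality of $X$ and $Y$---both points the paper leaves implicit.
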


\begin{proof}
By Proposition~\ref{prop:invars}, 
naturality holds for 
$(\tau^1,\mathrm{id})$ and $(\lambda^{(s)},\mathrm{id})$ 
(time and gain).  
By Lemma~\ref{lem:poset}, 
it holds for 
$(\mathrm{id},u_s)$ and $(\mathrm{id},v_s)$.  
The claim then follows from Lemma~\ref{lem:generators}.
\end{proof}

\subsection{Robust risk under group perturbations}

Let $\mathcal{D}$ be a distribution on labeled windows $(x,y)$ and let $\mu$ be a distribution on $M$ (test-time perturbations).
Define the perturbed distribution $\mathcal{D}_\mu$ by sampling $(x,y)\!\sim\!\mathcal{D}$ and $m\!\sim\!\mu$, then emitting $(m\!\cdot\!x,y)$.
Let $h:\mathbb{R}^d\to\mathbb{R}^C$ be a predictor, and let $\ell:\mathbb{R}^C\times\{1,\dots,C\}\to\mathbb{R}_+$ be a loss that is $L_\ell$-Lipschitz in its first argument (w.r.t.\ $\|\cdot\|_2$).

\begin{proposition}[Exact robustness for $M$-invariant representations]
\label{prop:exact}
If a representation $I$ is $M$-invariant, i.e.\ $I(m\!\cdot\!x)=I(x)$ for all $m\in M$, then for any $h$ and $\ell$,
\[
\mathbb{E}_{(x,y)\sim\mathcal{D}_\mu}\!\big[\ell(h(I(x)),y)\big]
\;=\;
\mathbb{E}_{(x,y)\sim\mathcal{D}}\!\big[\ell(h(I(x)),y)\big].
\]
\end{proposition}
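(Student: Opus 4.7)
The plan is to unfold the definition of $\mathcal{D}_\mu$ as a pushforward, apply the hypothesis $I(m\cdot x)=I(x)$ pointwise inside the expectation, and then integrate out the now-vacuous $m$-dependence. This is not a result that requires clever inequalities; the Lipschitz constant $L_\ell$ and the specific form of $\ell$ and $h$ never enter, because invariance collapses the loss on a perturbed sample to the loss on the clean sample exactly.

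First I would rewrite the left-hand expectation using the construction of $\mathcal{D}_\mu$. By definition, a sample from $\mathcal{D}_\mu$ is produced by drawing $(x,y)\sim\mathcal{D}$ and $m\sim\mu$ independently and emitting $(m\cdot x,\,y)$, so for any measurable $F$,
\[
\mathbb{E}_{(x',y')\sim\mathcal{D}_\mu}[F(x',y')]
=\mathbb{E}_{(x,y)\sim\mathcal{D}}\,\mathbb{E}_{m\sim\mu}\bigl[F(m\cdot x,\,y)\bigr].
\]
Applying this with $F(x',y')=\ell(h(I(x')),y')$ yields
\[
\mathbb{E}_{(x,y)\sim\mathcal{D}_\mu}[\ell(h(I(x)),y)]
=\mathbb{E}_{(x,y)\sim\mathcal{D}}\,\mathbb{E}_{m\sim\mu}\bigl[\ell(h(I(m\cdot x)),\,y)\bigr].
\]

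Second, I would invoke the $M$-invariance hypothesis $I(m\cdot x)=I(x)$, which holds for every $m\in M$ and every $x$ in the domain. Substituting this identity into the inner integrand gives $\ell(h(I(m\cdot x)),y)=\ell(h(I(x)),y)$, which is independent of $m$. The inner expectation over $\mu$ therefore contributes a factor of $\int_M d\mu=1$, and we are left with the clean-data expectation $\mathbb{E}_{(x,y)\sim\mathcal{D}}[\ell(h(I(x)),y)]$, as claimed.

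The only potentially delicate point, and it is purely a measurability check rather than a genuine obstacle, is that the application of Fubini's theorem in the first step requires the map $(x,m)\mapsto \ell(h(I(m\cdot x)),y)$ to be jointly measurable with respect to the product $\sigma$-algebra on $X(a)\times M$. This is immediate in our setting: the $M$-action $X(m,\mathrm{id})$ is measurable by the functor $X:\mathcal{C}\to\mathbf{Meas}$, $I$ and $h$ are measurable by hypothesis, and $\ell$ is continuous in its first argument (being Lipschitz) and hence measurable. No integrability issue arises either, since $\ell\ge 0$ and we may work with the extended-real integral and apply Tonelli's theorem.
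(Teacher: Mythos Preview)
Your proof is correct and follows exactly the same route as the paper: unfold the definition of $\mathcal{D}_\mu$ as an expectation over $(x,y)\sim\mathcal{D}$ and $m\sim\mu$, apply $I(m\cdot x)=I(x)$ pointwise, and note that Tonelli handles any integrability concern since $\ell\ge 0$. The paper's proof is slightly terser but otherwise identical; your additional remarks on joint measurability are a fine elaboration but not needed beyond what the paper already assumes.
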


\begin{proof}
By definition of $\mathcal{D}_\mu$,
\[
\mathbb{E}_{(x,y)\sim\mathcal{D}_\mu}\!\big[\ell(h(I(x)),y)\big]
=\mathbb{E}_{(x,y)\sim\mathcal{D},\,m\sim\mu}\!\big[\ell(h(I(m\!\cdot\!x)),y)\big].
\]
Using $I(m\!\cdot\!x)=I(x)$ pointwise, the right-hand side equals
$\mathbb{E}_{(x,y)\sim\mathcal{D}}\![\ell(h(I(x)),y)]$.
(If desired, Tonelli applies directly since $\ell\ge 0$.)
\end{proof}

\begin{theorem}[Stability bound for non-equivariant representations]
\label{thm:stability}
Let $\psi$ be any representation and assume $h$ is $L$-Lipschitz (w.r.t.\ $\|\cdot\|_2$).
Assume moreover that $\mathbb{E}_{(x,y)\sim\mathcal{D},\,m\sim\mu}\|\psi(m\!\cdot\!x)-\psi(x)\|_2<\infty$.
Then
\begin{multline}
\Big|
  \mathbb{E}_{(x,y)\sim\mathcal{D}_\mu}\!\big[\ell(h(\psi(x)),y)\big]
 -\mathbb{E}_{(x,y)\sim\mathcal{D}}\!\big[\ell(h(\psi(x)),y)\big]
\Big|\\[2pt]
\le\;
L_\ell L\,
\mathbb{E}_{\substack{(x,y)\sim\mathcal{D}\\ m\sim\mu}}
 \!\big[\|\psi(m\!\cdot\!x)-\psi(x)\|_2\big].
\end{multline}
In particular, the excess risk is bounded by the expected \emph{$\mu$-orbit displacement} of $\psi$.
\end{theorem}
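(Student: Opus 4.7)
The plan is to couple the two expectations on a common product sample space $\mathcal{D}\otimes\mu$, reduce the difference to a pointwise Lipschitz estimate, and then chain the two Lipschitz hypotheses. First I would unfold $\mathcal{D}_\mu$ by its construction: sampling $(x_0,y)\sim\mathcal{D}$ and $m\sim\mu$ independently and emitting $(m\!\cdot\!x_0,y)$ gives
\[
\mathbb{E}_{(x,y)\sim\mathcal{D}_\mu}\!\big[\ell(h(\psi(x)),y)\big]
=\mathbb{E}_{(x_0,y)\sim\mathcal{D},\,m\sim\mu}\!\big[\ell(h(\psi(m\!\cdot\!x_0)),y)\big],
\]
while the unperturbed expectation trivially lifts to the same product measure since its integrand does not depend on $m$. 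Subtracting inside a single expectation and applying Jensen's inequality ($|\mathbb{E} Z|\le \mathbb{E}|Z|$) reduces the theorem to bounding the expected pointwise deviation $\mathbb{E}\bigl|\ell(h(\psi(m\!\cdot\!x_0)),y)-\ell(h(\psi(x_0)),y)\bigr|$.

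Second, I would apply the Lipschitz hypotheses in sequence, pointwise in $(x_0,y,m)$. The $L_\ell$-Lipschitz property of $\ell$ in its first argument gives
\[
|\ell(h(\psi(m\!\cdot\!x_0)),y)-\ell(h(\psi(x_0)),y)|
\le L_\ell\,\|h(\psi(m\!\cdot\!x_0))-h(\psi(x_0))\|_2,
\]
and the $L$-Lipschitz property of $h$ then yields
$\|h(\psi(m\!\cdot\!x_0))-h(\psi(x_0))\|_2\le L\,\|\psi(m\!\cdot\!x_0)-\psi(x_0)\|_2$. Composing the two bounds and taking expectations produces precisely the claimed constant $L_\ell L$ multiplying the expected orbit displacement.

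The integrability hypothesis $\mathbb{E}\|\psi(m\!\cdot\!x)-\psi(x)\|_2<\infty$ is exactly what allows Fubini/Tonelli to commute the $\mathcal{D}$- and $\mu$-integrals and ensures the right-hand side (and hence each intermediate expectation) is finite; everything else is a mechanical chaining of the two Lipschitz constants. I do not anticipate a genuine obstacle here: the proof is essentially a coupling argument plus two Lipschitz steps, and the only subtlety, avoiding an $\infty-\infty$ when cancelling the two expectations, is neutralized by the stated finiteness assumption. If one wanted a sharper statement, the same argument would give an inner-orbit supremum bound by replacing the expectation over $m$ with $\sup_{m\in\mathrm{supp}(\mu)}$, at the cost of the usual measurability caveat.
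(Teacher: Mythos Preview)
Your proposal is correct and follows essentially the same route as the paper's proof: unfold $\mathcal{D}_\mu$ as a product expectation over $\mathcal{D}\otimes\mu$, apply Jensen's inequality to pull the absolute value inside, then chain the $L_\ell$- and $L$-Lipschitz bounds pointwise before taking expectations. Your explicit discussion of the integrability hypothesis (to justify Fubini and avoid $\infty-\infty$) and the closing remark on a supremum variant are nice additions, but the core argument is identical.
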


\begin{proof}
By the definition of $\mathcal D_\mu$,
\[
\mathbb{E}_{(x,y)\sim\mathcal{D}_\mu}\!\big[\ell(h(\psi(x)),y)\big]
=\mathbb{E}_{(x,y)\sim\mathcal D,\,m\sim\mu}\!\big[\ell(h(\psi(m\!\cdot\!x)),y)\big].
\]
Hence the left-hand side equals
\begin{multline}
\Big|
\mathbb{E}_{(x,y)\sim\mathcal D,\,m\sim\mu}
\big[\ell(h(\psi(m\!\cdot\!x)),y)-\ell(h(\psi(x)),y)\big]
\Big|\\[2pt]
\le\;
\mathbb{E}_{(x,y)\sim\mathcal D,\,m\sim\mu}
\Big|\ell(h(\psi(m\!\cdot\!x)),y)-\ell(h(\psi(x)),y)\Big|.
\end{multline}
where we used Jensen's inequality for $z\mapsto|z|$.
By the $L_\ell$-Lipschitz property of $\ell$ (first argument) and the $L$-Lipschitz property of $h$,
\[
\begin{aligned}
\Big|\ell\!\big(h(\psi(m\!\cdot\!x)),y\big)
      -\ell\!\big(h(\psi(x)),y\big)\Big|
&\;\le\;
L_\ell\,\big\|h(\psi(m\!\cdot\!x))-h(\psi(x))\big\|_2\\[2pt]
&\;\le\;
L_\ell L\,\|\psi(m\!\cdot\!x)-\psi(x)\|_2.
\end{aligned}
\]
Taking expectations yields the stated bound.
\end{proof}

Thus, if $\Phi$ is $M$-invariant on its spectral blocks and $P$-natural (Theorem~\ref{thm:catequiv-full}), then, under test-time $M$-perturbations $m\!\sim\!\mu$, the spectral component is exactly robust (Proposition~\ref{prop:exact}), while appended $M$-equivariant scalars (e.g., amplitude summaries) scale multiplicatively under gain.


Time-shift invariance obtained through spectral magnitudes necessarily discards phase information.  
Let $\Pi:\mathbb{C}^{T/2+1}\to\mathbb{R}^{T/2+1}$ denote the modulus map.  
For any two signals $m,m'$ satisfying $\Pi(\widehat{m})=\Pi(\widehat{m}')$ on the retained frequency bins, 
$\Phi$ cannot distinguish $m$ and $m'$ through its spectral blocks alone; 
discrimination must arise from other components (e.g.\ amplitude summaries or additional sensors) 
or from increasing $k$. This trade-off is explicit and controlled by the choice of $k$.

Because $P$ is thin, there exists at most one morphism $u:a\to b$, and any two composites $a\to b$ are equal.  
Hence, once the naturality squares commute on generators, they commute globally (Lemma~\ref{lem:generators}), 
ensuring that hierarchical pooling cannot introduce \emph{aliasing} across paths.

The TOTAL-node feature can be interpreted as a \emph{functorial average}
of its child nodes.
With $X(\mathrm{TOTAL})$ a product, $Y(v_s)=\tfrac{1}{|S|}\,\mathrm{id}$,
and $\Phi_{\mathrm{TOTAL}}$ defined as the arithmetic mean of the child
features, $P$-naturality guarantees that
\emph{transport then pool} equals \emph{pool then transport},
mirroring the consistency conditions familiar from sheaf theory on finite posets.

\section{Conclusion}
\label{sec:conclusion}

We presented a Group×Poset category-equivariant framework that models both compositional and hierarchical symmetries. Applied to HAR, the construction demonstrates large robustness gains and theoretical clarity via natural transformations. Future work includes extensions to spatio-temporal graphs and multimodal sensor networks.

Broadly speaking, this work advances category equivariance as a practical design principle for learning under structured variability. By modeling temporal and amplitude effects as a group and sensor hierarchy as a poset, then requiring representations to commute with both, we obtain models that are robust to realistic shifts (timing, gain, orientation) while remaining compact and transparent. Our Human Activity Recognition study demonstrates that such structure can be built directly into the representation, yielding substantial improvements under out-of-distribution conditions and making the source of robustness easy to audit.

The significance of category equivariance extends well beyond HAR. Many sensing and decision systems combine compositional transformations with hierarchical organization: multimodal wearables and IoT platforms (calibration drift, resampling, topology of devices), robotics and autonomous systems (viewpoint and scale with kinematic or task hierarchies), neuro/physiological monitoring (per-electrode gains with cortical region hierarchies), geospatial pipelines (map projections and resolutions with tiling hierarchies), and even knowledge-rich applications (taxonomy- or ontology-structured outputs). Categories naturally capture this mix of \emph{actions} (often non-invertible) and \emph{relations} (partial orders), providing a unifying abstraction where invariance and consistency are enforced by construction rather than by hope or augmentation.

Practically, the approach offers three recurring benefits. First, \textit{data efficiency}: parameter sharing across categorical morphisms reduces sample complexity and alleviates subject- or hardware-specific overfitting. Second, \textit{robustness}: group-invariant blocks eliminate sensitivity along label-invariant directions and stabilize performance under deployment shifts. Third, \textit{interpretability}: commuting diagrams make assumptions explicit, so one can trace which symmetries are protected and where discriminative information lives. Category equivariance is also complementary: it blends smoothly with group-equivariant layers, sheaf- or graph-based models, and modern sequence architectures, adding hierarchical consistency and non-invertible symmetries to the existing toolbox.

Several avenues follow naturally. On the modeling side, learnable layers with \emph{built-in naturality}—imposing the commuting constraints during training—can replace fixed operators while preserving guarantees. Richer symmetry classes (e.g., time-warp semigroups, photometric or sensor-specific semigroups) and also deeper hierarchies (multi-scale spatial partitions, taxonomies) should broaden applicability. On the evaluation side, standardized robustness suites that isolate group and poset components would make comparisons more meaningful than aggregate accuracy alone. Finally, certifiable bounds—combining invariance risk guarantees with category-theoretic consistency—could translate structural assumptions into actionable deployment assurances.

In short, category equivariance elevates robustness from an afterthought to a \emph{first-class architectural constraint}. By aligning models with the algebra of how data are generated and organized, it turns abstract structure into concrete reliability across domains where change is the rule rather than the exception.


\begin{thebibliography}{99}

\bibitem{Anguita2013}
D.~Anguita, A.~Ghio, L.~Oneto, X.~Parra, and J.~L.~Reyes-Ortiz.
A Public Domain Dataset for Human Activity Recognition using Smartphones.
In \emph{Proc. 21st European Symposium on Artificial Neural Networks, Computational Intelligence and Machine Learning (ESANN 2013)}, pp.~437--442, 2013.

\bibitem{Arjovsky2019}
M.~Arjovsky, L.~Bottou, I.~Gulrajani, and D.~Lopez-Paz.
Invariant Risk Minimization.
arXiv:1907.02893, 2019.

\bibitem{Bronstein2021}
M.~M.~Bronstein, J.~Bruna, T.~Cohen, and P.~Veli\v{c}kovi\'{c}.
Geometric Deep Learning: Grids, Groups, Graphs, Geodesics, and Gauges.
arXiv:2104.13478, 2021.

\bibitem{CohenWelling2016}
T.~S.~Cohen and M.~Welling.
Group Equivariant Convolutional Networks.
In \emph{Proc. 33rd International Conference on Machine Learning (ICML)}, PMLR~48, pp.~2990--2999, 2016.

\bibitem{CohenWelling2017}
T.~S.~Cohen and M.~Welling.
Steerable CNNs.
In \emph{International Conference on Learning Representations (ICLR)}, 2017.

\bibitem{Cohen2019Gauge}
T.~Cohen, M.~Weiler, B.~Kicanaoglu, and M.~Welling.
Gauge Equivariant Convolutional Networks and the Icosahedral CNN.
In \emph{Proc. 36th International Conference on Machine Learning (ICML)}, PMLR~97, pp.~1321--1330, 2019.

\bibitem{Finzi2021EMLP}
M.~Finzi, M.~Welling, and A.~G.~Wilson.
A Practical Method for Constructing Equivariant Multilayer Perceptrons for Arbitrary Matrix Groups.
In \emph{Proc. 38th International Conference on Machine Learning (ICML)}, PMLR~139, pp.~3318--3328, 2021.

\bibitem{FongSpivak2019}
B.~Fong and D.~I.~Spivak.
\emph{Seven Sketches in Compositionality: An Invitation to Applied Category Theory}.
Cambridge University Press, 2019.

\bibitem{HansenGhrist2019}
J.~Hansen and R.~Ghrist.
Toward a Spectral Theory of Cellular Sheaves.
\emph{Journal of Applied and Computational Topology}, 3(4):315--358, 2019.

\bibitem{KerivenPeyre2019}
N.~Keriven and G.~Peyr\'{e}.
Universal Invariant and Equivariant Graph Neural Networks.
In \emph{Advances in Neural Information Processing Systems (NeurIPS)}, 2019.

\bibitem{KondorTrivedi2018}
R.~Kondor and S.~Trivedi.
On the Generalization of Equivariance and Convolution in Neural Networks to the Action of Compact Groups.
In \emph{Proc. 35th International Conference on Machine Learning (ICML)}, PMLR~80, pp.~2747--2755, 2018.

\bibitem{Kwapisz2011}
J.~R.~Kwapisz, G.~M.~Weiss, and S.~A.~Moore.
Activity Recognition Using Cell Phone Accelerometers.
\emph{ACM SIGKDD Explorations Newsletter}, 12(2):74--82, 2011.

\bibitem{MacLane1998}
S.~Mac~Lane.
\emph{Categories for the Working Mathematician}, 2nd~ed.
Graduate Texts in Mathematics~5, Springer, 1998.

\bibitem{Maron2019}
H.~Maron, H.~Ben-Hamu, N.~Shamir, and Y.~Lipman.
Invariant and Equivariant Graph Networks.
In \emph{International Conference on Learning Representations (ICLR)}, 2019.

\bibitem{OppenheimSchafer2009}
A.~V.~Oppenheim and R.~W.~Schafer.
\emph{Discrete-Time Signal Processing}, 3rd~ed.
Pearson/Prentice Hall, 2009.

\bibitem{SakoeChiba1978}
H.~Sakoe and S.~Chiba.
Dynamic Programming Algorithm Optimization for Spoken Word Recognition.
\emph{IEEE Transactions on Acoustics, Speech, and Signal Processing}, 26(1):43--49, 1978.

\bibitem{Spivak2014}
D.~I.~Spivak.
\emph{Category Theory for the Sciences}.
MIT Press, 2014.

\bibitem{Thomas2018}
N.~Thomas, T.~Smidt, S.~Kearnes, L.~Yang, L.~Li, K.~Kohlhoff, and P.~Riley.
Tensor Field Networks: Rotation- and Translation-Equivariant Neural Networks for 3D Point Clouds.
arXiv:1802.08219, 2018.

\bibitem{WeilerCesa2019}
M.~Weiler and G.~Cesa.
General \(E(2)\)-Equivariant Steerable CNNs.
In \emph{Advances in Neural Information Processing Systems (NeurIPS)}, 2019.

\bibitem{Zaheer2017}
M.~Zaheer, S.~Kottur, S.~Ravanbakhsh, B.~P\'{o}czos, R.~R.~Salakhutdinov, and A.~J.~Smola.
Deep Sets.
In \emph{Advances in Neural Information Processing Systems 30}, pp.~3391--3401, 2017.

\end{thebibliography}
\end{document}